\newtheorem{theorem}{Theorem}
\newtheorem{lemma}{Lemma}
\theoremstyle{definition}
\newtheorem{definition}{Definition} 
\title{One-Class Semi-Supervised Learning:\\
Detecting Linearly Separable Class by its Mean}
\author{
Evgeny Bauman\\\
Markov Processes Inc.\\
\texttt{ebauman@markovprocesses.com} \\
\And
Konstantin Bauman \\
Stern School of Business \\
New York University \\
\texttt{kbauman@stern.nyu.edu}
}
\begin{document}

\maketitle

\begin{abstract}

In this paper, we presented a novel semi-supervised one-class classification algorithm which assumes that class is linearly separable from other elements. We proved theoretically that class is linearly separable if and only if it is maximal by probability within the sets with the same mean. Furthermore, we presented an algorithm for identifying such linearly separable class utilizing linear programming. We described three application cases including an assumption of linear separability, Gaussian distribution, and the case of linear separability in transformed space of kernel functions. 
Finally, we demonstrated the work of the proposed algorithm on the USPS dataset and analyzed the relationship of the performance of the algorithm and the size of the initially labeled sample.

\end{abstract}


\section{Introduction}\label{intro}

In machine learning, one-class classification problem aims to identify elements of a specific class among all other elements. This problem has been extensively studied in the last decade \cite{Khan:2009:SRT:1939047.1939070} and the developed methods were applied to a large variety of problems, such as detecting outliers \cite{Scholkopf:2001:ESH:1119748.1119749}, natural language processing \cite{Joffe962}, fraud detection \cite{7435726}, and many others \cite{Khan:2009:SRT:1939047.1939070}.

The traditional supervised approach to the classification problems infers a decision function based on labeled data. In contrast, semi-supervised learning deals with the situation where relatively few labeled training elements are available, but a large number of unlabeled elements are given. This approach is suitable for many practical problems where is it relatively expensive to produce labeled data, such as automatic text classification. Semi-supervised learning may refer to either {\it inductive} learning or {\it transductive} learning \cite{6280886}. The goal of {\it inductive} learning is to learn a decision rule that would predict correct labels for the new unlabeled data, whereas {\it transductive} learning aims to infer the correct labels for the given unlabeled data only. 
In this paper we address the semi-supervised one-class transductive learning problem, i.e. the main goal of the proposed algorithm is to estimate the labels for the given unlabeled data. 

In order to make semi-supervised learning work, one has to assume some structure in the underlying distribution of data. The most popular such assumptions include smoothness assumption, cluster assumption, or manifold assumption \cite{Chapelle:2010:SL:1841234}. In the proposed algorithm we make an assumption of linear separability of the class. Therefore, we study the question of what information is needed to estimate the linearly separable class. As a result, we prove that linearly separable class can be detected based on its mean.


In this paper we made the following contributions:
\begin{itemize}
\item Proposed a new semi-supervised approach to one-class classification problem.
\item Proved that class is linearly separable if and only if it's maximal by probability within all sets with the same mean.
\item Presented an exact algorithm for detecting linearly separable class by its mean, utilizing linear programming.
\item Described three application cases including an assumption of linear separability of the class, Gaussian distribution, and the case of linear separability in the transformed space of kernel functions. 
\item Demonstrated the work of the proposed algorithm on the USPS dataset and analyzed the relationship between the performance of the algorithm and the size of the labeled sample.
\end{itemize}

\section{Related work}\label{prior}

There are two main blocks of the related work. The first block consists of works pertaining to one-class classification problem. The most popular supervised support vector machine approach to the one-class problem (OC-SVM) was independently introduced by Sch\"{o}lkopf et al. in \cite{NIPS1999_1723} and Tax and Duin in \cite{Tax2004}. This studies extended the SVM methodology to handle training consisting of labels of only one class. In particular, \cite{NIPS1999_1723} proposed an algorithm which computes a binary function that captures regions in input space where the probability density lives (its support), i.e. a function such that most of the data will live in the region where the function is nonzero.
Later, OC-SVM was successfully applied to the anomaly and outliers detection \cite{PMID:21346963, 4217460, Manevitz:2002:OSD:944790.944808, Zhang:2007:OCS:1503549.1503556}, and the density estimation problem \cite{Munoz2004}. This approach was effective in remote-sensing classification in the situations when users are only interested in classifying one specific land-cover type, without considering other classes \cite{5559411}. Authors of \cite{Zhang:2007:OCS:1503549.1503556} applied OC-SVM approach to detect anomalous registry behavior by training on a dataset of normal registry accesses. 
In \cite{PMID:21346963} authors use OC-SVM as a means of identifying abnormal cases in the domain of melanoma prognosis. More applications of OC-SVM can be found in \cite{Khan:2009:SRT:1939047.1939070}.

The second block of the related works consists of semi-supervised approaches to the classification problems. Book \cite{Chapelle:2010:SL:1841234} provides an extensive review of this field and \cite{DBLP:journals/corr/PrakashN14} presents a survey describing the current state-of-the-art approaches. 
There are several works that study one-class semi-supervised learning \cite{Amer:2013:EOS:2500853.2500857, RePEc:pal:jorsoc:v:64:y:2013:i:4:p:513-529, 5559411, 5460897}. In particular, authors of \cite{RePEc:pal:jorsoc:v:64:y:2013:i:4:p:513-529} evaluate the suitability of semi-supervised one-class classification algorithms as a solution to the low default portfolio problem. In \cite{5460897} authors implemented modifications of OC-SVM in the context of information retrieval. 
All these works utilize various modifications of OC-SVM approach. Note, that in this cases the optimization problem utilizes quadratic programming. 

In contrast to this prior works, we propose a semi-supervised one-class classification algorithm that utilizes linear programming for optimization. Our algorithm is transductive and mostly focused on estimating labels for the given unlabeled data.

\section{Algorithm}\label{algorithm}


We consider probability space $(\mathbb{R}^n,\Sigma,P)$, where $\mathbb{R}^n$ is Euclidean space,   $\Sigma$ is a Borel $\sigma$-algebra over $\mathbb{R}^n$ (it contains all half-spaces of $\mathbb{R}^n$), and $P$ is a probability measure in $\mathbb{R}^n$.
We assume that $(\mathbb{R}^n,\Sigma,P)$ satisfies the following conditions:
\begin{enumerate}
\item{There exists a finite second moment $\int x^Tx dP(x)$}\label{cond1}
\item{The probability of any hyperplane in $\mathbb{R}^n$ is not equal to $1$.}\label{cond2}
\end{enumerate}

Set $A\in \Sigma$ is defined by its indicator function
\[h_A(x) = \left\{
  \begin{array}{lr}
    1, & x \in A\\
    0, & x \notin A
  \end{array}
\right.
\]
Further, in the paper instead of the sets, we work with the indicator functions defining them and, therefore, we also define concepts of ``linear separability" and ``maximum by probability" for indicator functions.

Based on Condition \ref{cond1}, indicator functions $h_A\in L^2(\mathbb{R}^n,\Sigma,P)$.
We denote by $\mathcal{H} = \{h_A|A\in\Sigma\}$ -- the set of indicator functions for all measurable sets. The convex hull of $\mathcal{H}$ would be the set of functions 
$$\hat{\mathcal{H}} =Conv(\mathcal{H})=\{h\in L^2(\mathbb{R}^n,\Sigma,P)|
\forall x: ~ 0 \leq h(x) \leq 1\}.$$
By construction $\mathcal{H}\subseteq\hat{\mathcal{H}}$. Elements of $\hat{\mathcal{H}}$ usually interpreted as fuzzy sets, saying that set defined by $h$ contains element $x$ with the weight $h(x),$ where $0 \leq h(x) \leq 1$.


According to Condition \ref{cond1} for each indicator function $h$ defining measurable fuzzy set   with the probability greater that $0$, we can construct its mean (i.e., first normalized moment of the set defined by $h$):

\begin{equation}
\mu(h) = \frac{M(h)}{P(h)} = \frac{\int x h(x)dP(x)}{\int h(x)dP(x)},
\end{equation}

where $P(h) = \int h(x) dP(x)$ -- scalar equal to the probability of $h$, and $M(h) = \int x h(x) dP(x)$ -- $n$-dimensional vector, equal to first un-normed moment of $h$.

Further, we construct linear mapping of $\mathcal{H}$ to $(n+1)$-dimensional Euclidean space:
\begin{equation}
\varphi(h) = \left(M(h),P(h)\right).
\end{equation}

$\mathcal{M} = \varphi(\mathcal{H})$ is the image of the set $\mathcal{H}$ with mapping $\varphi$ to $\mathbb{R}^{n+1}$. 
Since $\varphi$ is linear, then $\hat{\mathcal{M}} = Conv(\mathcal{M}) = \varphi(\hat{\mathcal{H}}).$

Based on the fact that $\hat{\mathcal{H}}$ is convex and closed set, and Conditions \ref{cond1} and \ref{cond2} we conclude that $\hat{\mathcal{M}}$ is also convex, closed and bounded. Moreover, $\hat{\mathcal{M}}$ does not belong to any hyperplane in $\mathbb{R}^{n+1}$.


\subsection{Linear Separability and Maximum by Probability}
\begin{definition}\label{h_linearly_separable} 
Indicator function $h \in \hat{\mathcal{H}}$ is {\it linearly separable} with  vector $\overline{b} = (\overline{c}, d)$, where $\overline{c} \in \mathbb{R}^{n}$ ($\overline{c}$ is non-trivial) and $d$ is a constant, if 
the following conditions are satisfied almost everywhere:
\begin{itemize}
\item if $c^Tx+d > 0$, then $h(x) = 1$
\item if $c^Tx+d < 0$, then $h(x) = 0$.
\end{itemize}
\end{definition}

Note, that Definition \ref{h_linearly_separable} implies that  that if $h \in \hat{\mathcal{H}}$ is linearly separable then $P(\{x\in \mathbb{R}^{n} |c^Tx+d\neq0\}\cap\{x\in \mathbb{R}^{n}| 0 < h(x) < 1\})=0$. It means that fuzziness of $h$ concentrates only on the hyperplane defined by equation $c^Tx + d = 0$.

\begin{lemma}\label{lemma} 

Image $y^*= \varphi(h^*)$ of function $h^*\in \hat{\mathcal{H}}$ (with $P(h^*) \neq 0$), is a boundary point of $\hat{\mathcal{M}}$ 
if and only if 
there exists a non-trivial vector $\bar{b}$, such that $h^*$ is linearly separable with $\bar{b}$. 


\end{lemma}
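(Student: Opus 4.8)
The plan is to establish the two directions separately, exploiting the fact that $\hat{\mathcal{M}}$ is a convex, closed, bounded subset of $\mathbb{R}^{n+1}$ that is not contained in any hyperplane (so it is full-dimensional). For the direction ``boundary point $\Rightarrow$ linearly separable,'' I would use a supporting hyperplane argument. If $y^* = \varphi(h^*)$ lies on the boundary of the full-dimensional convex body $\hat{\mathcal{M}}$, then there is a nontrivial supporting functional on $\mathbb{R}^{n+1}$, i.e.\ a vector $(\bar c, e)$ with $(\bar c, e) \neq 0$ such that $\langle (\bar c, e), y^* \rangle \ge \langle (\bar c, e), y \rangle$ for all $y \in \hat{\mathcal{M}}$. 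Writing $y = \varphi(h) = (M(h), P(h))$ for $h \in \hat{\mathcal H}$, this says
\begin{equation}
\int \bigl(\bar c^{\,T} x + e\bigr) h^*(x)\, dP(x) \;\ge\; \int \bigl(\bar c^{\,T} x + e\bigr) h(x)\, dP(x) \quad \text{for all } h \in \hat{\mathcal H}.
\end{equation}
Since $h$ ranges over all measurable functions with $0 \le h \le 1$, the right-hand side is maximized pointwise by taking $h(x) = 1$ where $\bar c^{\,T} x + e > 0$ and $h(x) = 0$ where $\bar c^{\,T} x + e < 0$. Hence $h^*$ must itself satisfy these conditions almost everywhere, which is exactly the statement that $h^*$ is linearly separable with $\bar b = (\bar c, e)$. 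I should check that $\bar c$ is nontrivial: if $\bar c = 0$ then $e \ne 0$, and the supporting inequality would force $P(h^*)$ to be the extreme value of the $P$-coordinate over $\hat{\mathcal M}$, i.e.\ $P(h^*) \in \{0,1\}$; the case $P(h^*)=0$ is excluded by hypothesis, and $P(h^*)=1$ corresponds to $h^* \equiv 1$ a.e., which is trivially linearly separable by, say, $\bar b = (\bar c_0, d)$ with the whole space on the positive side — a minor case to dispatch separately, or we simply note any separating vector works.

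For the converse, ``linearly separable $\Rightarrow$ boundary point,'' suppose $h^*$ is linearly separable with $\bar b = (\bar c, d)$, $\bar c$ nontrivial. Then for every $h \in \hat{\mathcal H}$ we have the pointwise inequality $(\bar c^{\,T} x + d)\, h^*(x) \ge (\bar c^{\,T} x + d)\, h(x)$ for almost every $x$ (on the open half-spaces this holds because $h^*$ takes the optimal $0/1$ value, and on the hyperplane $\bar c^{\,T} x + d = 0$ both sides vanish). Integrating gives $\langle(\bar c, d), \varphi(h^*)\rangle \ge \langle(\bar c, d), \varphi(h)\rangle$ for all $h \in \hat{\mathcal H}$, hence for all $y \in \hat{\mathcal M}$ by taking convex combinations and closures. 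So $(\bar c, d)$ defines a supporting hyperplane of $\hat{\mathcal M}$ at $\varphi(h^*)$. To conclude $\varphi(h^*)$ is a \emph{boundary} point, it suffices that the supporting functional is not constant on $\hat{\mathcal M}$ — otherwise $\hat{\mathcal M}$ would lie in a hyperplane, contradicting the established fact that it does not. (If the functional were constant on $\hat{\mathcal M}$, it is constant on $\mathcal M = \varphi(\mathcal H)$, and since $h \equiv 0$ and $h \equiv 1$ both lie in $\mathcal H$, constancy forces $\int(\bar c^{\,T}x + d)\,dP(x) = 0$; one then pushes this to a contradiction with Condition~\ref{cond2}, since $\bar c \neq 0$ means $\{\bar c^{\,T} x + d > 0\}$ is a genuine half-space of probability strictly between $0$ and $1$, whose indicator gives a strictly larger value of the functional.)

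The main obstacle I anticipate is the careful handling of the measure-zero sets and the ``almost everywhere'' clauses — in particular, verifying that maximizing the linear functional $\int(\bar c^{\,T}x + e)h(x)\,dP(x)$ over $0 \le h \le 1$ really does pin down $h^*$ a.e.\ rather than just on some full-measure subset, and ensuring the optimizer is essentially unique off the separating hyperplane. This is where Condition~\ref{cond2} does the real work: it guarantees the hyperplane $\{\bar c^{\,T} x + d = 0\}$ has probability less than $1$, so the half-spaces are genuinely charged and the supporting functional is non-degenerate. The rest is a fairly standard separation/support-function argument in $\mathbb{R}^{n+1}$, and the linearity of $\varphi$ together with $\hat{\mathcal M} = \varphi(\hat{\mathcal H})$ lets us transfer everything between the function space and the finite-dimensional image without difficulty.
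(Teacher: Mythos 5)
Your proposal follows essentially the same route as the paper: both directions rest on the supporting-hyperplane characterization of boundary points of the convex body $\hat{\mathcal{M}}$, combined with the observation that the linear functional $h \mapsto \int(\bar c^{\,T} x + d)\,h(x)\,dP(x)$ over $0 \le h \le 1$ is maximized pointwise by the half-space indicator, so the maximizer must agree with it almost everywhere off the hyperplane. Your extra care about the degenerate case $\bar c = 0$ and about why a supporting functional forces $y^*$ onto the boundary (via the full-dimensionality of $\hat{\mathcal{M}}$) addresses two points the paper's proof passes over silently, but it does not change the argument.
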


\begin{proof}
Based on the Supporting hyperplane theorem \cite{Boyd:2004:CO:993483}, 
$y^*$ is a boundary point of convex closed set $\hat{\mathcal{M}}$ if and only if there exists a non-trivial vector $\bar{b} = (\bar{c},d)$, where $\bar{c} \in \mathbb{R}^{n}$ and $d$ is a constant, such that $b^T y^* \geq b^T y,~\forall y \in \hat{\mathcal{M}}$,
which is equivalent to $b^T \varphi(h^*) \geq b^T \varphi(h),~\forall h \in \hat{\mathcal{H}}$.
If we denote by $\Delta_b (h_1,h_2) = b^T \varphi(h_1) - b^T \varphi(h_2)$, then the inequality transforms to 
\begin{equation}\label{lemmequation}
\Delta_b (h^*,h) \geq 0, ~\forall h \in \hat{\mathcal{H}}.
\end{equation}

{\bf Sufficiency.}
If $h^*$ is linearly separable with $\bar{b}= (\bar{c},d)$, then for arbitrary $h\in \hat{\mathcal{H}}$ we consider the difference:
$$\Delta_b(h^*,h) = b^T \varphi(h^*) - b^T \varphi(h)  = \left(c^T M(h^*) + d P(h^*)\right) - \left(c^T M(h) + d P(h)\right)$$
that can be transformed to 
$$\Delta_b(h^*,h) = \int(c^T x+d)(h^* (x) - h (x)) dP(x)  = \Delta^+_b(h^*,h) +\Delta^0_b(h^*,h) +\Delta^-_b(h^*,h), \mbox{ where}$$
$$\begin{array}{l}
\Delta^+_b(h^*,h) = \int_{c^T x+d>0}(c^T x+d)(1-h (x))dP(x),\\
\Delta^0_b~(h^*,h) = \int_{c^T x+d=0}(c^T x+d)(1-h (x))dP(x) = \int_{c^T x+d=0}0 \cdot (1-h (x))dP(x) = 0,\\
\Delta^-_b(h^*,h) = \int_{c^T x+d<0}(c^T x+d)(0-h (x)) dP(x).
\end{array}$$

Since $0 \leq h (x) \leq 1~\forall x$, then $\Delta^+_b(h^*,h)\geq 0$ and $\Delta^-_b(h^*,h) \geq 0$, and, therefore, $\Delta_b(h^*,h) \geq 0 ~\forall h \in \hat{\mathcal{H}}$. It shows that there exists $\bar{b}$ that satisfies inequality (\ref{lemmequation}) and implies that $\varphi(h^*)$ is a boundary point of $\hat{\mathcal{M}}$.

{\bf Necessity.}
If $y^*= \varphi(h^*)$ is a boundary point of $\hat{\mathcal{M}},$ then there exists $\bar{b}$ satisfying inequality (\ref{lemmequation}). 
Vector $\bar{b} = (\bar{c},d)$ defines the half-space in $\mathbb{R}^{n}$ with the following indicator function:
\[h_{b}(x) = \left\{
  \begin{array}{lr}
    1, & \mbox{ if }~ c^Tx + d \geq 0,\\
    0, & \mbox{ if }~ c^Tx + d < 0.
  \end{array}
\right.
\]

Based on the same calculation as in proof of Sufficiency: $\Delta_b(h_b, h) \geq 0~ \forall h\in \hat{\mathcal{H}}$, and thus $\Delta_b(h_b, h^*) \geq 0$. However, inequality (\ref{lemmequation}) implies that $\Delta_b(h_b, h^*) \leq 0$, which means that $\Delta_b(h_b, h^*) = 0.$ Therefore, $\Delta^+_b(h_b,h^*) = 0$, and $\Delta^-_b(h_b,h^*) = 0$, that can be expressed in 
$$\int_{c^T x+d>0}(1-h^* (x)) dP(x) =\int_{c^T x+d<0} (0-h^* (x)) dP(x) =0.$$.

This equality implies that the following conditions satisfied almost everywhere:
\begin{itemize}
\item if $c^Tx+d > 0$, then $h^*(x) = 1$
\item if $c^Tx+d < 0$, then $h^*(x) = 0$,
\end{itemize}
and, therefore, $h^*$ is linearly separable with $\bar{b}= (\bar{c},d).$
\end{proof}

\begin{definition}
Let's denote by $W_\alpha$ a set of all indicator functions that define sets in $\mathbb{R}^n$ with the means in $\alpha$, i.e. $W_\alpha = \{h \in \hat{\mathcal{H}}: \mu(h)=\alpha\}$.
Indicator function $h^* \in W_\alpha$ is called {\it maximal by probability with mean $\alpha$} if $\forall h\in W_\alpha: P(h) \leq P(h^*)$. 
\end{definition}


\begin{lemma}\label{lemmaInterval} 
$\varphi(W_\alpha)$ is a finite interval in $\mathbb{R}^{n+1}$, i.e. for any $h^*\in W_\alpha$ if $\varphi(h^*) = (M(h^*),P(h^*)) = y^*$ then $\varphi(W_\alpha) = \{y = \lambda\cdot y^* | 0 < \lambda \leq \lambda_{max}^*\}$, where $\lambda_{max}^* \geq 1$.
\end{lemma}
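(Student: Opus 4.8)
The plan is to first pin down the geometry of $\varphi(W_\alpha)$. Membership $h\in W_\alpha$ forces $M(h)=\alpha P(h)$, and $P(h)>0$ for every $h\in W_\alpha$ (otherwise $\mu(h)$ is undefined), so by linearity of $\varphi$ we get $\varphi(h)=(M(h),P(h))=P(h)\cdot(\alpha,1)$. Fixing $h^*\in W_\alpha$ and writing $y^*=\varphi(h^*)=P(h^*)(\alpha,1)$ with $P(h^*)>0$, every point of $\varphi(W_\alpha)$ therefore has the form $\lambda y^*$ with $\lambda=P(h)/P(h^*)>0$. Thus $\varphi(W_\alpha)$ sits inside the open ray $R=\{\lambda y^*:\lambda>0\}$, and $y^*\neq 0$ since its last coordinate $P(h^*)$ is positive.

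Next I would establish the matching inclusion relative to $\hat{\mathcal{M}}$. If $\lambda>0$ and $\lambda y^*\in\hat{\mathcal{M}}$, then, since $\hat{\mathcal{M}}=\varphi(\hat{\mathcal{H}})$, there is $h\in\hat{\mathcal{H}}$ with $\varphi(h)=\lambda y^*=(\lambda M(h^*),\lambda P(h^*))$; hence $P(h)=\lambda P(h^*)>0$ and $\mu(h)=M(h)/P(h)=M(h^*)/P(h^*)=\alpha$, so $h\in W_\alpha$. Together with the previous paragraph this gives the clean identity $\varphi(W_\alpha)=R\cap\hat{\mathcal{M}}$. (Equivalently, downward closure can be seen directly without $\hat{\mathcal{M}}$: if $\lambda_1 y^*=\varphi(h_1)$ with $h_1\in W_\alpha$ and $0<\lambda\le\lambda_1$, then $h:=(\lambda/\lambda_1)h_1$ lies in $\hat{\mathcal{H}}$ because $0\le h\le 1$, satisfies $\varphi(h)=\lambda y^*$ by linearity, and has mean $\alpha$.)

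The conclusion then follows by reading off the stated properties of $\hat{\mathcal{M}}$ — convex, closed, and bounded. Since $0=\varphi(h_\emptyset)\in\hat{\mathcal{M}}$, the set $R\cap\hat{\mathcal{M}}$, after adjoining the origin, is the intersection of a line through the origin with a convex closed bounded set, hence a closed bounded segment; removing the origin (which is excluded, as it corresponds to $P(h)=0$) leaves a half-open segment $\{\lambda y^*:0<\lambda\le\lambda^*_{max}\}$. Finiteness of $\lambda^*_{max}$ is exactly boundedness of $\hat{\mathcal{M}}$ together with $y^*\neq 0$; the value $\lambda=1$ is attained by $h^*$, so $\lambda^*_{max}\ge 1$; and the supremum $\lambda^*_{max}$ is attained because $\hat{\mathcal{M}}$ is closed — take $\lambda_k\uparrow\lambda^*_{max}$ with $\lambda_k y^*\in\varphi(W_\alpha)\subseteq\hat{\mathcal{M}}$, pass to the limit to get $\lambda^*_{max}y^*\in\hat{\mathcal{M}}=\varphi(\hat{\mathcal{H}})$, and check as above that its preimage lies in $W_\alpha$.

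I expect the only delicate point to be this last step: that $\lambda^*_{max}y^*$ is genuinely realized by some $h\in W_\alpha$ rather than merely approached, which is where the closedness of $\hat{\mathcal{M}}$ (itself obtained in the excerpt from closedness of $\hat{\mathcal{H}}$ in $L^2$ and Conditions 1--2) does the work. Everything else reduces to the linearity of $\varphi$ and the fact that the rescaling $h\mapsto t h$ keeps $h$ in $\hat{\mathcal{H}}$ for $t\in[0,1]$.
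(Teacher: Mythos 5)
Your proof is correct and takes essentially the same route as the paper's: both reduce the claim to the identity $\varphi(W_\alpha) = \{\lambda y^* \mid \lambda > 0\} \cap \hat{\mathcal{M}}$ and then invoke convexity, closedness, and boundedness of $\hat{\mathcal{M}}$ to conclude that this intersection is a half-open segment with attained maximum. You are somewhat more explicit than the paper at its terse points, namely that $0 = \varphi(h_\emptyset) \in \hat{\mathcal{M}}$ (equivalently the rescaling $h \mapsto (\lambda/\lambda_1) h_1$) is what makes the interval reach down to the origin, and that closedness of $\hat{\mathcal{M}}$ is what guarantees $\lambda^*_{max}$ is actually attained.
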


\begin{proof}

For an arbitrary $h^* \in W_\alpha$ with $\varphi(h^*) = (m^*,p^*) = y^*$, we construct a ray $D_{h^*} = \{y = \lambda\cdot y^* | \lambda > 0\}$ in $\mathbb{R}^{n+1}$.
Since $\hat{\mathcal{M}}$ is convex, closed and bounded, then intersection of $D_{h^*}$ and  $\hat{\mathcal{M}}$ is a finite interval $I_{h^*} = D_{h^*} \cap \hat{\mathcal{M}} = \{y = \lambda\cdot y^* | 0 < \lambda \leq \lambda_{max}^*\}$. Since $y^* \in I_{h^*}$, then $\lambda_{max}^* \geq 1$.

The statement $h \in  W_\alpha$ means that $\mu(h) = \frac{M(h)}{P(h)} = \alpha = \frac{M(h^*)}{P(h^*)} = \mu(h^*)$. If we denote $\frac{P(h)}{P(h^*)} = \lambda$, then $M(h) = \lambda\cdot M(h^*)$. Which is equivalent to the fact that there is exists a certain $0 < \lambda \leq \lambda_{max}^*$, such that $\varphi(h) = (m, p) =  (\lambda \cdot m^*, \lambda \cdot p^*) = \lambda \cdot y^* $. This means that $I_{h^*} = \varphi(W_\alpha)$.

\end{proof}

\begin{theorem}\label{equalityTheorem}
Function $h^* \in W_\alpha$ is maximal by probability, 
if and only if $h^*$ is linearly separable.
\end{theorem}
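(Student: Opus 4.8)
The plan is to reduce both properties to the geometry of $\hat{\mathcal{M}}$ and then quote Lemmas \ref{lemma} and \ref{lemmaInterval}. First I would fix $y^* = \varphi(h^*) = (M(h^*), P(h^*))$ (note $P(h^*) > 0$ since $h^* \in W_\alpha$) and record the key observation: by Lemma \ref{lemmaInterval}, $\varphi(W_\alpha) = I_{h^*} = \{\lambda y^* \mid 0 < \lambda \le \lambda_{max}^*\}$ with $\lambda_{max}^* \ge 1$, and the probability of the element of $W_\alpha$ sitting at $\lambda y^*$ is its last coordinate $\lambda P(h^*)$, strictly increasing in $\lambda$. Hence $h^*$ (the point $\lambda = 1$) is maximal by probability in $W_\alpha$ \emph{if and only if} $\lambda_{max}^* = 1$, i.e. iff $y^*$ is the endpoint of the segment $I_{h^*} = D_{h^*}\cap\hat{\mathcal{M}}$ farthest from the origin. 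Both implications of the theorem then follow from this single equivalence.

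For \emph{maximal $\Rightarrow$ linearly separable}: if $h^*$ is maximal, the equivalence gives $\lambda_{max}^* = 1$, so the whole open ray $\{\lambda y^* \mid \lambda > 1\}$ misses $\hat{\mathcal{M}}$ while $y^*\in\hat{\mathcal{M}}$; since $\hat{\mathcal{M}}$ is closed this makes $y^*$ a boundary point of $\hat{\mathcal{M}}$, and Lemma \ref{lemma} hands me a non-trivial $\bar b$ with which $h^*$ is linearly separable.

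For \emph{linearly separable $\Rightarrow$ maximal}: starting from $h^*$ linearly separable with $\bar b = (\bar c, d)$, I would reuse the sufficiency computation inside the proof of Lemma \ref{lemma}, which shows that this very $\bar b$ satisfies $\Delta_b(h^*,h)\ge 0$, i.e. $b^T\varphi(h) \le b^T y^*$, for all $h \in \hat{\mathcal{H}}$. Restricting to $h \in W_\alpha$ and writing $\varphi(h) = \lambda y^*$ turns this into $\lambda\,(b^Ty^*) \le b^Ty^*$. Since $b^Ty^* = c^TM(h^*) + dP(h^*) = P(h^*)(c^T\alpha+d)$ and, by linear separability, $c^T\alpha + d = \frac{1}{P(h^*)}\int(c^Tx+d)h^*(x)\,dP(x) \ge 0$ (the integrand is $0$ where $c^Tx+d\le 0$ and $(c^Tx+d)\cdot 1$ where $c^Tx+d>0$), we have $b^Ty^* \ge 0$; when $b^Ty^* > 0$ I can cancel and obtain $\lambda \le 1$ for every $h \in W_\alpha$, so $\lambda_{max}^* = 1$ and $h^*$ is maximal.

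The step I expect to be the main obstacle is the borderline case $b^Ty^* = 0$, i.e. $c^T\alpha + d = 0$, where $\alpha$ lies on the separating hyperplane and the inequality above becomes vacuous; this is where Conditions \ref{cond1}--\ref{cond2} must do the work. My intended route: from $c^T\alpha+d=0$ and linear separability, $\int_{c^Tx+d>0}(c^Tx+d)\,dP(x)=0$, so the open half-space $\{c^Tx+d>0\}$ is $P$-null; then for any $h\in W_\alpha$ the identity $\int(c^Tx+d)h(x)\,dP(x) = P(h)(c^T\alpha+d)=0$ together with $(c^Tx+d)h(x)\le 0$ on $\{c^Tx+d<0\}$ forces $h=0$ a.e. off the hyperplane $\{c^Tx+d=0\}$; Condition \ref{cond2} keeps that hyperplane from carrying all the mass, so $W_\alpha$ is effectively a family of functions supported on a proper hyperplane with mean $\alpha$, and the remaining task is to show the linearly separable $h^*$ is already maximal within this lower-dimensional family. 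Nailing down this last point — and identifying precisely what non-degeneracy it needs — is the part of the proof I would treat most carefully.
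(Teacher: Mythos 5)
Your overall route is the paper's: both directions are reduced to the geometry of $\hat{\mathcal{M}}$ through Lemmas \ref{lemma} and \ref{lemmaInterval}, and your necessity argument is essentially identical to the paper's. The difference is in sufficiency, and it is to your credit. The paper's proof of that direction rests on the single inference ``$y^*$ is a boundary point of $\hat{\mathcal{M}}$, hence $\lambda_{max}^*=1$,'' which is not valid for a general boundary point of a convex set: the radial segment $D_{h^*}\cap\hat{\mathcal{M}}$ can lie entirely inside the boundary, in which case $y^*$ is a boundary point without being the far endpoint. Your computation with the supporting functional locates exactly when this happens, namely $b^Ty^*=P(h^*)(c^T\alpha+d)=0$, i.e.\ when $\alpha$ sits on the separating hyperplane; away from that case your cancellation $\lambda\le 1$ closes the argument cleanly.

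The borderline case you flag as unfinished cannot be finished: the sufficiency implication actually fails there. Take $n=1$ and $P=\tfrac12\delta_{0}+\tfrac12\delta_{-1}$ (Conditions \ref{cond1} and \ref{cond2} hold). Let $h^*(0)=\tfrac12$ and $h^*\equiv 0$ elsewhere. Then $h^*$ is linearly separable with $\bar b=(1,0)$, because $\{x>0\}$ is $P$-null and $h^*=0$ on $\{x<0\}$, and $\mu(h^*)=0$ with $P(h^*)=\tfrac14$; but the indicator of $\{0\}$ also lies in $W_0$ and has probability $\tfrac12$, so $h^*$ is not maximal. As you observed, in this regime every $h\in W_\alpha$ is carried by the hyperplane $\{c^Tx+d=0\}$, and linear separability with $\bar b$ imposes no constraint whatsoever on $h^*$ on that hyperplane, so nothing can force maximality within that lower-dimensional family. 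The sufficiency direction therefore needs a non-degeneracy hypothesis, e.g.\ $P(\{c^Tx+d>0\})>0$ (equivalently $c^T\alpha+d>0$ for the separating $\bar b$), under which your argument is complete. So the step you could not nail down is a genuine gap, but it is a gap in the paper's own proof (and statement), not a point the paper handles and you missed.
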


\begin{proof}

{\bf Necessity.}
Based on Lemma \ref{lemmaInterval}, $\varphi(W_\alpha) = \{y = \lambda\cdot y' | 0 < \lambda \leq \lambda_{max}'\}$ for an arbitrary $h'\in W_\alpha$.  
If $h^*\in W_\alpha$ is maximal by probability, then $P(h^*) = \lambda^*\cdot P(h') = \lambda_{max}\cdot P(h')$. Therefore, $\varphi(h^*) = y^* = \lambda_{max}'\cdot y'$, which means that $\varphi(h^*)$ is a boundary point of set $\hat{\mathcal{M}}$. Finally, based on Lemma \ref{lemma}, $h^*$ is linearly separable.

{\bf Sufficiency.}
If $h^*$ is linearly separable, then based on Lemma \ref{lemma}, $\varphi(h^*) = y^*$ is a boundary point of set $\hat{\mathcal{M}}$. 
Based on Lemma \ref{lemmaInterval}, $\varphi(W_\alpha) = \{y = \lambda\cdot y^* | 0 < \lambda \leq \lambda_{max}^*\}$.  Since $y^*$ is a boundary, then $\lambda_{max}^* = 1$. Therefore, $\forall h\in W_\alpha: ~ P(h) = \lambda\cdot P(h^*)$ where $0< \lambda \leq 1$. This means that $h^*$ is a maximal by probability with mean in $\alpha$.

\end{proof}

\subsection{Detecting Linearly Separable Class by its Mean}

Let $x_1,\dots,x_N$ be i.i.d. random variables in $\mathbb{R}^n$ with distribution $P$. The special case where $P$ is the empirical distribution $P_N (A) = \frac{1}{N}\sum_{i=1}^N h_A (x_i)$.

Based on Theorem \ref{equalityTheorem}, if class $A$ is linearly separable it is also maximal by probability within all sets with the same mean $\alpha=\mu(A)$. Therefore, the problem of detecting class $A$ would be equivalent to the problem of identifying the maximal by probability indicator function $h\in \hat{\mathcal{H}}$ with its mean in a given point $\alpha\in\mathbb{R}^n$, where the mean and the probability of $h$ are calculated in the following way: $M(h)=\frac{1}{N}\sum_{i=1}^N x_i h(x_i)$, $P(h)=\frac{1}{N}\sum_{i=1}^N h(x_i)$. In other words, in this case, we are searching for the set of elements with the specified mean $\alpha$ and containing the maximum number of elements. 
More formally we solve the following problem:

\vspace{-1.2cm}
{\setstretch{2.0}
\begin{equation}\label{max_prob_problem}
\left\{
  \begin{array}{l}
    P(h)=\frac{1}{N}\sum_{i=1}^N h(x_i) \xrightarrow[{h(x_1 ),\dots,h(x_N)}]{} max;\\
    \mu(h) = \frac{\sum_{i=1}^N x_i \cdot h(x_i)}{\sum_{i=1}^N h(x_i)} = \alpha;\\
    0\leq h(x_i ) \leq 1, ~ i=1,\dots,N.
  \end{array}
\right.
\Leftrightarrow
\left\{
  \begin{array}{l}
    \sum_{i=1}^N h(x_i) \xrightarrow[{h(x_1 ),\dots,h(x_N)}]{} max;\\
    \sum_{i=1}^N (x_i - \alpha) \cdot h(x_i) = 0;\\
    0\leq h(x_i ) \leq 1, ~ i=1,\dots,N.
  \end{array}
\right.
\end{equation}
}

\vspace{-0.2cm}
This problem can be solved using linear programming optimizing by vector $(h(x_1),\dots,h(x_N ))$. If there is exists at least one set with a given center $\alpha$, then Problem (\ref{max_prob_problem}) would have feasible solution.

In conclusion, we proved the equivalence of the concepts of ``linear separability'' and ``maximal by probability with a given mean", and proposed the exact algorithm of detecting linearly separable class based on its mean using linear programming.


\subsection{Algorithms of Semi-Supervised Transductive Learning for One-Class Classification}


{\bf Problem.}
Set $X =\{x_1,\dots,x_N\}\subset \mathbb{R}^n$ consists of two parts: sample $X_A=\{x_1,\dots,x_l\}\subset X \cap A$ containing labeled elements for which we know that they belong to class $A$, 
and the rest of the set $X_{unlabeled} = \{x_{l+1},\dots,x_N\}\subset X$ containing unlabeled elements for which we do not know if they belong to class $A$ or not. The problem is to identify all elements in $X$ that belong to class $A$ based on this information.


\subsubsection{Linearly Separable Class}\label{linearlySeparable}

In real practical problems we usually do not have the exact value of $\mu(A)$ and, therefore, we have to estimate it.
The proposed approach assumes that $X_A$ is generated with the same distribution law as $A$ and class $A$ is {\it linearly separable} in $\mathbb{R}^n$. 
Based on the first assumption, mean of $A$ is estimated by the mean of $X_A$ calculated as follows:$\mu(X_A)=\frac{1}{l}\sum_{j=1}^l x_j$. 
Based on the assumption of linear separability of $A$ it can be estimated by solving linear programming Problem (\ref{max_prob_problem}) with $\alpha=\mu(X_A)$. 
 More formally, we solve the following problem:

\vspace{-0.8cm}
{\setstretch{2.0}
\begin{equation}\label{linearlySeparableCase}
\left\{
  \begin{array}{l}
    \sum_{i=1}^N h(x_i) \xrightarrow[{h(x_1 ),\dots,h(x_N)}]{} max;\\
    \sum_{i=1}^N \left( x_i -\frac{\sum_{j=1}^l x_j}{l}\right) h(x_i) = 0;\\
    0\leq h(x_i ) \leq 1, ~ i=1,\dots,N.
  \end{array}
\right.
\end{equation}
}

\vspace{-0.2cm}
Since $\mu(X_A)$ is an estimation of the $\mu(A)$, in calculations we replace equalities $f(h)=0$ by two inequalities $-\varepsilon \geq f(h) \geq \varepsilon$ for a certain small value of $\varepsilon >0$.

Note that the performance of such estimation depends on two following factors:
\vspace{-0.3cm}
\begin{itemize}
\item The real linear separability of class $A$ in space $\mathbb{R}^n$.
\vspace{-0.1cm}
\item The difference between the mean estimated by $\mu(X_A)$ and the real mean $\mu(A)$.
\end{itemize}

In order to show how the proposed method works in practice, we constructed a synthetic experiment presented in Figure \ref{fig1} (left). In particular, it shows 300 points belonging to class $A$ (red crosses, yellow triangles, and blue pluses) and 750 points which do not belong to $A$ (purple crosses and gray circles). Class $A$ is linearly separable in $\mathbb{R}^2$ by construction. Initially we have 100 labeled points in set $X_A$ (red crosses on Fig.\ref{fig1} (left)) and we calculate its mean $\mu(X_A)$. Using the proposed method we construct the estimation $\hat A\in W_\alpha$ of class $A$ (yellow triangles and purple crosses Fig.\ref{fig1} (left)). As a result just $1\%$ of elements from class $A$ were classified incorrectly ($False ~Negative$), making $Recall = 99\%$. Moreover, there are no elements in $False ~Positive$ group, which makes the $Precision =100\%$.

\begin{figure}
\centering
\includegraphics[width=0.99\linewidth]{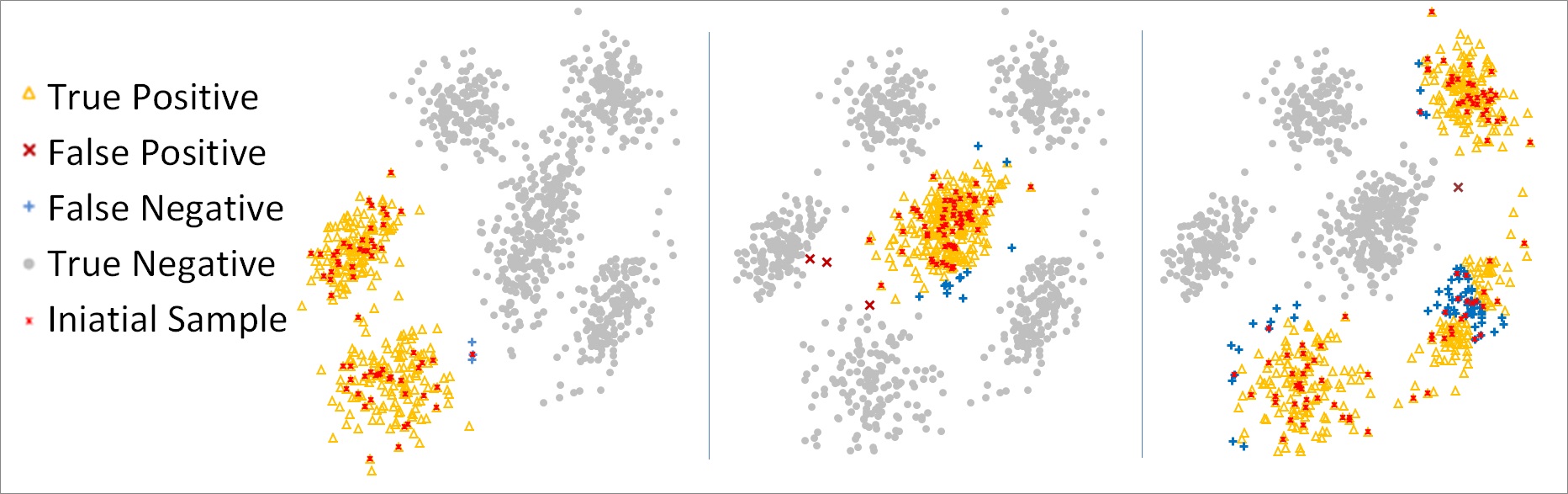}

\caption{Synthetic examples of (left) linearly separable class; (middle) normally distributed class; (right) linearly separable in transformed kernel space class.}\label{fig1}
\end{figure}


\subsubsection{Class with Gaussian Distribution}

In a certain class of problems, we can assume that elements of class $A$ have a Gaussian distribution. This distribution is defined by its mean and covariation matrix, and, therefore, class $A$ would fit an ellipsoid in $\mathbb{R}^n$ where most of the elements from $A$ are inside the ellipsoid and most of the others are outside. We can estimate such class $A$ by fixing not only the mean of $A$ but also its covariation matrix, i.e. every second moment of the class. We do it by enriching the space $\mathbb{R}^n$ with all pairwise products between the $n$ initial coordinates (features). In this case Problem (\ref{max_prob_problem}) transforms to the following form:

\vspace{-0.5cm}
{\setstretch{2.0}
\begin{equation}\label{normal_class}
\left\{
  \begin{array}{l}
    \sum_{i=1}^N h(x_i) \xrightarrow[{h(x_1 ),\dots,h(x_N)}]{} max;\\
    \sum_{i=1}^N \left( x_i -\frac{\sum_{j=1}^l x_j}{l}\right) h(x_i) = 0;\\
    \sum_{i=1}^N \left( x_i^{(t)}\cdot x_i^{(r)} -\frac{\sum_{j=1}^l x_j^{(t)}\cdot x_j^{(r)}}{l}\right) h(x_i) = 0;~t=1,\dots,n;~r=1,\dots,n;\\
    0\leq h(x_i ) \leq 1, ~ i=1,\dots,N.\\
  \end{array}
\right.
\end{equation}
}
\vspace{-0.3cm}

In this case, the resulting class would be separable in $\mathbb{R}^n$ with a surface of the second order.

Figure \ref{fig1} (middle) illustrates the work of the proposed method in the case of normally distributed class. All colors have the same meaning as in Figure \ref{fig1} (left) described in Section \ref{linearlySeparable}. There are 300 elements that belong to class $A$ and 750 elements that do not belong to $A$. In this example class, $A$ has a Gaussian distribution. Based on the initial sample $X_A$ consisting of 100 elements we calculate the estimation of the mean and the covariation matrix of $A$ and construct the estimation of $A$ by solving the Problem \ref{normal_class} using linear programming. The results of this estimation are presented in Figure \ref{fig1} (middle). In this case we got $Precision=96\%$ and $Recall=98\%$.


\subsubsection{Linearly Separable Class in the Transformed Space of Kernel Functions}

One of the ways of identifying the space where the class is linearly separable is introducing kernel function $K(x_i, x_r )$ \cite{Aizerman67theoretical}. The linear programming Problem \ref{linearlySeparableCase} for identifying the linearly separable set in the transformed space states as follows: 

\begin{equation}\label{kernel_class}
\left\{
  \begin{array}{l}
    \sum_{i=1}^N h(x_i) \xrightarrow[{h(x_1 ),\dots,h(x_N)}]{} max;\\
    \sum_{i=1}^N \left( K(x_i, x_r) -\frac{\sum_{j=1}^l K(y_j,x_r)}{l}\right) h(x_i) = 0;~r=1,\dots,N;\\
    0\leq h(x_i ) \leq 1, ~ i=1,\dots,N.\\
  \end{array}
\right.
\end{equation}

In order to illustrate the use of the proposed method in the transformed space of kernel functions, we construct an experiment where 450 elements belong to class $A$ and 600 elements do not (Fig.\ref{fig1} (right)). We optimize the indicator function $h$ in the Problem (\ref{kernel_class}) within the sets having the same mean in the transformed space as of the initial sample $X_A$ consisting of 150 elements from $A$. As a result of this experiment we got $Precision=94.5\%$ and $Recall=94.6\%$.




\section{Experiments}\label{experiments}

In order to show how the proposed algorithm works in practice, we applied it to the problem of classification of the handwritten digits in USPS dataset. This dataset was used previously in many studies including \cite{Scholkopf:2001:ESH:1119748.1119749}. It contains 9298 digit images of size $16 \times 16 = 256$ pixels. 
In our experiment, we tried to identify the class of digit ``0'' within the rest of the digits. We denote the class of ``0'' by $A$. There are 1553 elements of class $A$ in the dataset. 

First, we ran linear SVM algorithm in the original 256-dimensional space and it had an error in the classification of only one element, which means that class $A$ is almost linearly separable in this space. Secondly, we applied the proposed algorithm. We identified the real mean $\mu(A)$ of class $A$ and used it to find the maximal by probability set with the same mean, as defined in Problem (\ref{max_prob_problem}). Based on the Theorem \ref{equalityTheorem} this estimation $\bar{A}$ is linearly separable. The results show that $\bar{A}$ differs from $A$ in only one element. It means that our approach works well if we use the real mean of class $A$ for the calculations.
 
However, in practice we can only estimate the real mean of class $A$ based on a small sample of elements $X_A \subset A$ and solve Problem (\ref{linearlySeparableCase}) as described in Section \ref{linearlySeparable}. Therefore, in the next experiment we study the relationship between the size of initial sample $X_A$ an the performance of the constructed classification in terms of $precision$ and $recall$ measures. In particular, we run 100 times the following experiment:
\begin{enumerate}
\item Divide the dataset randomly to $T_{3100}$ and $S_{6198}$ with $3100$ and $6198$ of digit images correspondingly. The average number of elements from class $A$ in $S_{6198}$ is equal to 1046.
\item Within $T_{3100}\cap A$ create 20 random samples $\{X_A^{25}, X_A^{50}, X_A^{75}, \dots, X_A^{500}\}$ with sizes from 25 to 500 with a step of 25.
\item For each sample $X_A^*$ calculate its mean $\mu(X_A^*)$ and solve the linear programming Problem (\ref{linearlySeparableCase}) for the set $S_{6198}$ and mean $\mu(X_A^*)$.
\item Calculate the $precision$ and $recall$ measures of classification performance of elements in $S_{6198}$.
\end{enumerate}

Note, that in this experiment we used separate set $X_A^*$ that is not included into $S_{6200}$. We did it in order to eliminate the growth of the performance with the size of $X_A^*$ based on the larger number of known labels and not on the better estimation of the mean of class $A$.

\begin{figure}
\centering
\includegraphics[width=0.48\linewidth]{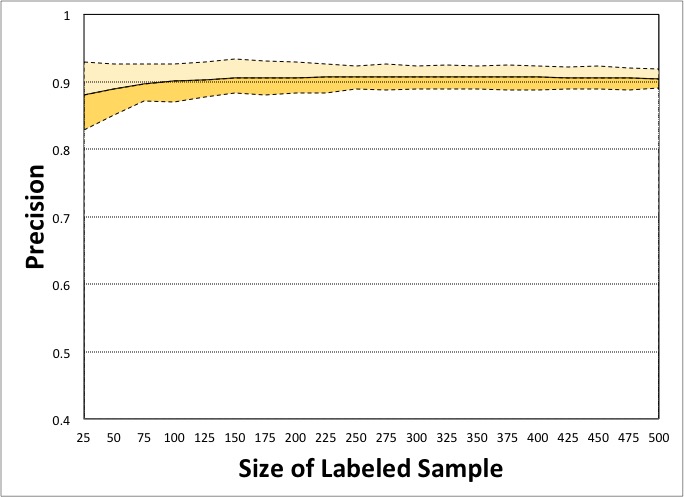}
\includegraphics[width=0.48\linewidth]{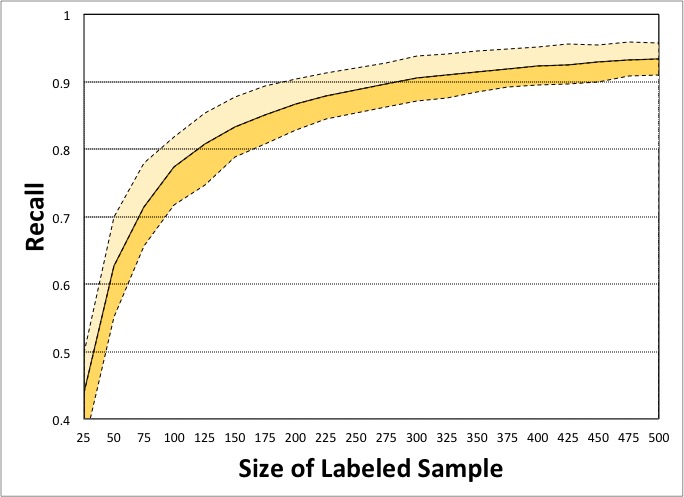}
\caption{Precision (left) and Recall (right) as functions from the size of the training sample $X_A$.}\label{preRec}
\end{figure}


Finally, for each size of sample $X_A$ we obtained 100 random experiments with corresponding values of $precision$ and $recall$. Figure \ref{preRec} shows the plots of the average $precision$ and $recall$ as functions from the initial sample size with 10\% and 90\% quantiles. 
In particular, Figure \ref{preRec} shows that the results are stable since the corridor between the 10\% and 90\% quantiles is narrow. The average value of $precision$ does not grow significantly, and the standard deviation decreases with the growth of the sample size. The average value of $recall$ grows with the sample size and its standard deviation decreases. Note that,  $precision$ reaches 90\% level within 100 elements in the labeled set, and $recall$ reaches this level within only 300 elements in the set $X_A$. It means, that in this experiment the high error in the estimating mean of the class $A$ leads to identifying only a subset of $A$ as a maximal by probability set.

In \cite{Scholkopf:2001:ESH:1119748.1119749} authors used the same dataset and they also tried to separate class of ``0'' from other digits. However, \cite{Scholkopf:2001:ESH:1119748.1119749} work in transformed space of kernel functions, whereas in our experiments we work in the original 256-dimensional space. 
The training set in their case consisted of all ``0'' images from 7291 images, which means that they had about 1200 images of ``0''. The test set contained 2007 images. 
They got $precision=91\%$ and $recall = 93\%$.
Figure \ref{preRec} shows that our approach reaches the same level of performance in terms of $precision$ and $recall$ with the initially labeled sample of 500 elements.


\section{Conclusion}\label{conclusion}

In this paper, we presented a novel semi-supervised one-class classification method which assumes that class is linearly separable from other elements. We proved theoretically that class $A$ is linearly separable if and only if it is maximal by probability within the sets with the same mean. Furthermore, we presented an algorithm for identifying such linearly separable class utilizing linear programming. We described three application cases including the assumption of linear separability of the class, Gaussian distribution, and the case of linear separability in the transformed space of kernel functions. 
Finally, we demonstrated the work of the proposed algorithm on the USPS dataset and analyzed the relationship of the performance of the algorithm and the size of the initially labeled sample.

As a future research, we plan to examine the work of the proposed algorithm in the real industrial settings.

\bibliographystyle{abbrv}
\bibliography{one_class_lib}

\end{document}